\DeclareMathOperator{\Llocal}{L_\textup{local}}
\DeclareMathOperator{\Lclassify}{L_\textup{classify}}
\newcommand{\NA}{---}
\newcommand{\tabsplit}[1]{%
  \begin{tabular}[t]{@{}c@{}}   
    #1
  \end{tabular}
  }
\newif\ifincludenotationtable
\newif\ifarxiv
\newif\ifpaper
\title{Meta-Learning to Cluster}
\date{\vspace{-5ex}}
\author{Yibo Jiang\footnote{Department of Computer Science, Columbia University, New York, US, \texttt{\url{yj2460@columbia.edu}, \url{verma@cs.columbia.edu}}}  \and \addtocounter{footnote}{0} Nakul Verma\footnotemark[\value{footnote}]}
\begin{document}

\maketitle

\begin{abstract}
Clustering is one of the most fundamental and wide-spread techniques in exploratory data analysis. Yet, the basic approach to clustering has not really changed: a practitioner hand-picks a task-specific clustering loss to optimize and fit the given data to reveal the underlying cluster structure. Some types of losses---such as $k$-means, or its non-linear version: kernelized $k$-means (centroid based), and DBSCAN (density based)---are popular choices due to their good empirical performance on a range of applications. Although every so often the clustering output using these standard losses fails to reveal the underlying structure, and the practitioner has to custom-design their own variation. In this work we take an intrinsically different approach to clustering: rather than fitting a dataset to a specific clustering loss, we train a recurrent model that \emph{learns} how to cluster. The model uses as training pairs examples of datasets (as input) and its corresponding cluster identities (as output). 
By providing multiple types of training datasets as inputs, our model has the ability to generalize well on unseen datasets (new clustering tasks).
Our experiments reveal that by training on simple synthetically generated datasets or on existing real datasets, we can achieve better clustering performance on unseen real-world datasets when compared with standard benchmark clustering techniques. Our meta clustering model works well even for small datasets where the usual deep learning models tend to perform worse.
\end{abstract}

\section{Introduction}
Clustering is one of the most ubiquitous techniques in data analysis that has important applications in numerous domains that extend far beyond machine learning \citep{xu2003document,cai2004hierarchical,tung2010enabling,Xu2015}. 
The main objective of clustering is to group a set of objects in such a way that objects in the same group (cluster) are more \emph{similar} to each other (with respect to some domain-specific notion of similarity), than to those in other groups.

The prevailing approach to clustering is to optimize a specific objective function (usually called the cluster loss) that encodes the desired domain-specific notion of similarity to reveal the underlying group structure in data. Certain cluster losses such as $k$-Means and DBSCAN \citep{DBLP:conf/kdd/EsterKSX96} have shown promising results in various applications, such as object recognition \citep{stockman1987object} and recommendation systems \citep{kim2008recommender}. 
In modern settings, as the data we collect has become more complex, more expressive cluster losses encoded by deep network architectures have shown some success as well  \citep{DBLP:conf/icml/HuMTMS17,DBLP:journals/corr/abs-1801-07648}. 


A practitioner usually hand-picks a cluster loss in an ad-hoc manner; they basically choose whichever loss (either a preexisting one, or a newly designed one) that seems to give a satisfactory result. And a priori, it is unclear how this choice can be made in a principled way, or better yet completely bypass this step and directly achieve a good clustering.  
  
In this work, we want to take a step towards this direction. Rather than optimizing for a specific custom-designed loss, we develop a model that \emph{learns} how to cluster. 
Such ``learning to learn'' approach comes under the framework of meta-learning.
 Instead of training on a large amount of data from a single task, meta-learning systems are trained on a large number of (similar) smaller tasks and are used to make predictions on newly available (similar) tasks.
Deep meta-learning systems have shown remarkable success on supervised learning \citep{DBLP:conf/icml/SantoroBBWL16,Mishra2017ASN} and reinforcement learning tasks \citep{DBLP:journals/corr/MirowskiPVSBBDG16,Wang2018PrefrontalCA}.

We propose a simple yet highly effective meta-learning model to solve for clustering tasks. Our model finds the cluster structure directly without having to choose a specific cluster loss for each new clustering problem. 
There are two key challenges in training such a model for clustering:
\begin{enumerate}[label=(\roman*)]
\item since clustering is fundamentally an unsupervised problem, there is a lack of availability of true cluster identities for each training task, and
\item cluster label for each new datapoint depends upon the labels assigned to other datapoints in the same clustering task.
\end{enumerate}

We systematically address each of these issues: (i) We show that trainining on simple synthetically generated datasets or other real world labelled datasets with similar statistics (such as having the same number of dimensions and number of categories) can generalize to real previously unseen datasets. 
(ii) We train sequentially and use a recurrent network (using LSTMs, \cite{hochreiter1997long}) as our clustering model. A recurrent memory model helps assign clustering labels effectively based on data points seen before. 

Our experiments reveal several remarkable observations: even if we train our meta-learning model just on simple synthetically generated data, we can achieve better clustering results on some real-world problems than by using popular preexisting linear and non-linear benchmark cluster losses.
In addition, our meta model trained on labelled real datasets of different distributions can also transfer its clustering ability to unseen datasets.
Moreover, by effectively \emph{learning} how to cluster, unlike centroid-based clustering approaches like $k$-means, our meta-learning model also has the ability to approximate the right number clusters in simple tasks; thereby obviating the need to pre-specify the number of clusters. 
To best of our knowledge, this is the first clustering model using end-to-end deep meta-learning framework.

\section{Related work}
\label{sec:related}
\textbf{Deep Learning for Clustering.} \citet{DBLP:journals/corr/abs-1801-07648} provide an overview of all major frameworks that use deep learning for clustering tasks. Broadly, deep clustering consists of two parts: feature extraction phase and clustering phase. Usually feature extraction is done through an auto-encoder \citep{hinton2006reducing}, which serves as a new representation for the subsequent clustering phase. \citet{DBLP:conf/icml/HuMTMS17} propose an alternate approach using ``self-augmentation'', which encourages the new representation to map the input data close to its augmentation, hence acting as a regularizer.

Though the extracted features can be used directly by standard clustering algorithms, deep learning models usually optimize further over specific clustering losses. \citet{DBLP:conf/icml/YangFSH17}, for instance, propose to optimize over the $k$-means loss, thus encouraging learning $k$-means friendly feature representations.
\citet{DBLP:conf/icml/XieGF16} on the other hand, use a loss based on student t-distribution and can accommodate for soft clusterings. 
\citet{DBLP:conf/icml/XieGF16}, along with \citet{DBLP:conf/icml/HuMTMS17}, further explore information-theoretic losses to achieve good clusterings.

Like many deep architectures, deep clustering models require a large amount of data to train which may not be possible for many clustering problems. This pushes the need for a model that can work well in a data-limited scenario similar to one-shot or a few-shot learning settings in the supervised meta learning and this is where meta clustering can be effective.  

\textbf{Meta Learning.} Meta learning, often referred to as \emph{learning to learn}, is closely related to one-shot or few-shot learning. It has shown promising results in supervised learning.
   In the standard classification case, it can greatly benefit when training data is limited \citep{DBLP:conf/icml/SantoroBBWL16}. In the reinforcement setting, it benefits by training more generalized agents rather than ones that specialize on a restricted domain \citep{Wang2017LearningTR}. 
   

Standard approaches to learn a meta-learning model include defining a distribution over the structure of input data to perform inference \citep{DBLP:conf/cogsci/LakeSGT11,Snell2017PrototypicalNF}, or to use a memory model such as long short-term memory model (LSTM, \citealp{hochreiter1997long}) \citep{DBLP:conf/icml/SantoroBBWL16,Wang2018PrefrontalCA} 
There are several generic gradient-based learning methods developed for meta-learning, such as MAML \citep{Finn2017ModelAgnosticMF} and Reptile \citep{Nichol2018OnFM}. 

%


To best of our knowledge, only a few works focus on using meta-learning framework for doing clustering. 
Closely related works by \citet{DBLP:conf/semcco/FerrariC12} and \citet{DBLP:journals/isci/FerrariC15} estimate which of the preexisting clustering losses works well for a new clustering task. Their approach is therefore limited to losses which the user has to provide. In contrast, we implicitly learn an appropriate loss for the new clustering task. \citet{DBLP:conf/nips/Garg18} pushes the framework further, laying down the theoretical foundations for meta clustering. But like \citet{DBLP:conf/semcco/FerrariC12} and \citet{DBLP:journals/isci/FerrariC15}, \citet{DBLP:conf/nips/Garg18} uses meta-attributes (like computing data covariance) rather than directly from the input. Moreover, it learns binary similarity functions without explicitly returning the clustering and compare  outputs with a simple majority rule. 
Another interesting line of work by \cite{Hsu2018UnsupervisedLV} uses unsupervised learning to improve upon the downstream supervised learning task.
Our work on the other hand, specifically focuses on learning unsupervised clustering, and shows empirical success. 
Notice that both supervised and reinforcement meta-learning require some sort ``guidance" from data in the form of labels (supervised) or rewards (reinforcement). This is fundamentally different from the unsupervised meta learning. By using a recurrent model and doing multiple passes through datapoints, we achieve ``self-guidance" and consequently good performance.

\section{Meta Learning for Clustering}
\includenotationtabletrue
\label{sec:meta}

\subsection{Problem Setup}
\label{sec:meta-setup}

A meta-clustering model $\mathcal{M}$ maps data points to cluster labels. During meta-learning, the model
is trained to be able to adapt to a set of clustering tasks $\{\mathcal{T}_i \}$. At the end of meta-training, $\mathcal{M}$ would produce clustering labels for new test task $\mathcal{T}_{\textup{test}}$.

In our case, each training task $\mathcal{T}_i$ consists of a set of data points $\mathcal{X}^i$ (a dataset) and their associated cluster labels $\mathcal{L}^i$. Hence, each $\mathcal{T}_i = (\mathcal{X}^i, \mathcal{L}^i)$. It is worth noting that 
$\mathcal{X}^i$ and $\mathcal{L}^i$ are themselves partitioned into subsets based on cluster identities. More specifically, $\mathcal{X}^i = \{\mathcal{X}^i_1,\mathcal{X}^i_2,\ldots,\mathcal{X}^i_{K_i} \}$ and $\mathcal{L}^i = \{\mathcal{L}^i_1,\mathcal{L}^i_2,\ldots,\mathcal{L}^i_{K_i} \}$, where $K_i$ is the number of clusters in task $i$. Notice that unlike the supervised learning case, we allow the number of clusters for each task to vary. By introducing this cluster-specific generalization, our clustering model has the flexibility to potentially approximate the number of clusters during test time. Since labeling for a test task is yet to be determined, the structure of the test task $\mathcal{T}_{\textup{test}}$ is different from training and consists of only a set of datapoints $\mathcal{X}_\textup{test}$.

The cluster labels for training can either come from synthetically generated training tasks or by using labelled data from similar application domains. In Section~\ref{sec:experiments}, we will present experiments on training with synthetic data, real labeled data and mixture of the two.


Another important distinction between supervised and the unsupervised case is that any permutation of the labeling does not change the cluster quality. This additional degree of freedom can potentially hinder the model optimization and the efficacy of the final prediction. We circumvent this issue by fixing a permutation during training, thereby limiting the parameter search space and accelerating the learning process (see Section \ref{sec:synth_data} for details).

\subsection{Proposed Network Architecture and Algorithm}
\label{sec:method}
\textbf{A Need for Memory-Based Model.} Unlike the supervised case where classification label for a datapoint can be determined by its feature values alone, cluster identity for a datapoint depends solely on the identity assigned to its neighbors (previously seen data points). This necessitates using a memory based clustering model. 
We choose an LSTM network \citep{hochreiter1997long} to capture long range dependencies as we do training and testing sequentially.
Apart from storing sequential information over extended time intervals, LSTMs are also flexible enough to learn what kind of information should be passed or blocked for effective prediction (see  \citealp{hochreiter1997long} for details). LSTMs have also been shown to give good accuracies in supervised meta learning \citep{DBLP:conf/icml/SantoroBBWL16}. Here we apply them for the unsupervised meta-clustering case.

At each time step $t$, our LSTM module takes in a datapoint $\mathbf{x}$ and a score vector $\mathbf{a}_{t-1}$ from previous time step ${t-1}$
and outputs a new score $\mathbf{a}_t$ for the current timestep. The score vector encodes the quality the predicted label assigned to the datapoint $\mathbf{x}$. 

\begin{figure}[t]
\centering
  \includegraphics[width=0.5\textwidth]{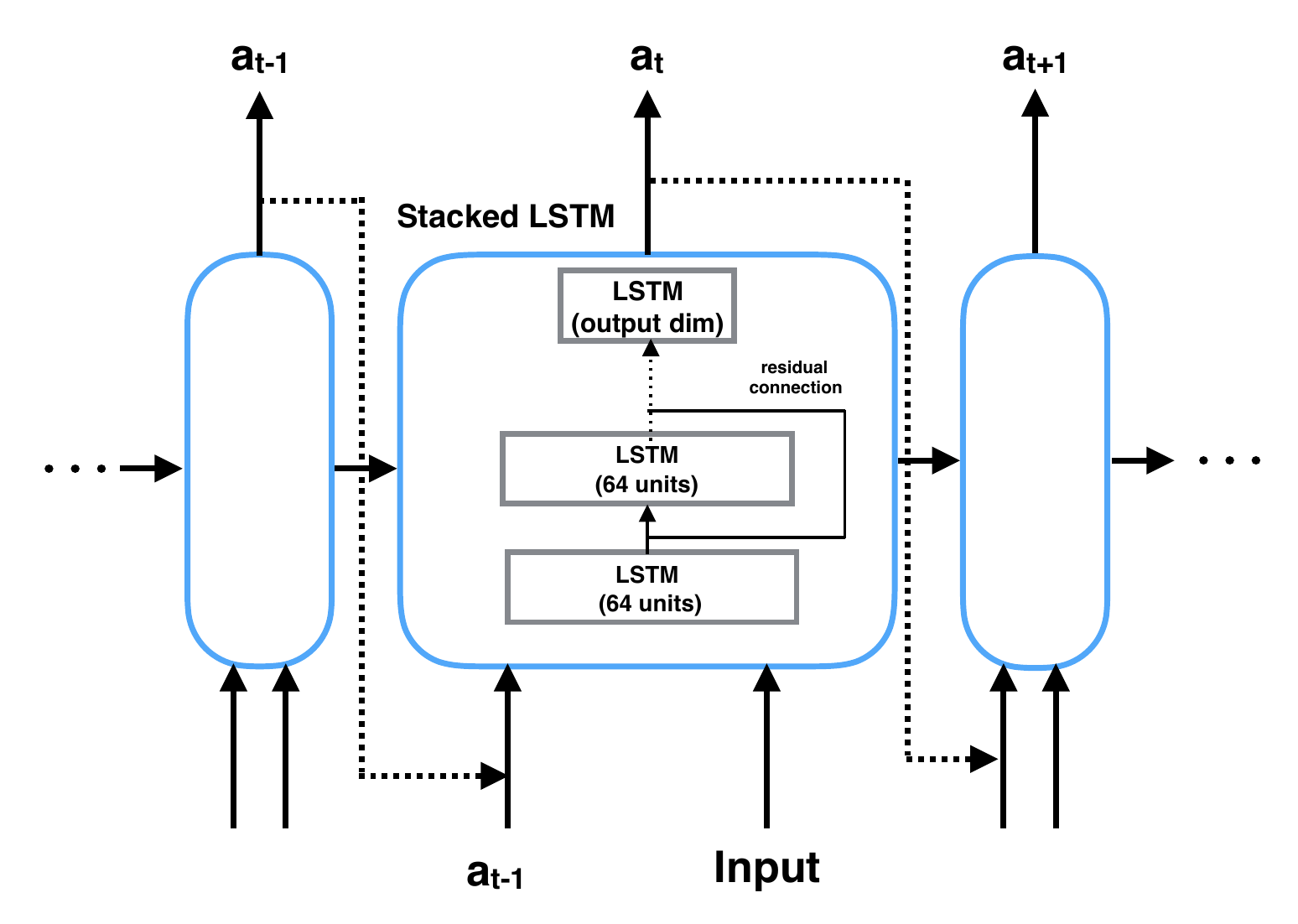}
  \caption{LSTM architecture of meta-learning for clustering.} 
  \label{fig:architecture}
\end{figure}

In this work, we use four LSTMs stacked on top of each other. The first three layers all have 64 units with residual connections while the last layer can have number of hidden units as either (i) the number of clusters (if the desired number of clusters is known, like in the case of $k$-means), or (ii) the maximum number of possible clusters (if the number of clusters is unknown). 
See Figure \ref{fig:architecture} for more details.

\textbf{Loss Function and Optimization.}
Our network architecture for meta-clustering optimizes for a loss function that is a combination of classification loss $\Lclassify$ (that ensures output labels match the training labels) and a local loss $\Llocal$ (that ensures output labels for nearby datapoints match each other). Specifically,
\begin{align*}
    \mathbf{L}_\textup{meta-cluster}(\Phi) &:=\lambda \Lclassify (\Phi) +(1-\lambda)\Llocal (\Phi),
\end{align*}
where $\Phi$ denotes the parameters of our architecture, $\lambda$ is a hyper-parameter controlling the trade-off between the two losses.

Learning the model parameters for the task $\mathcal{T}_i$ proceeds as follows. Let $\mathbf{x}_j^i \in \mathcal{X}^i$ be a datapoint with ground truth cluster label $s_j^i\in\{1,2,\ldots,K_i\}$, and let $\mathbf{r}_j^i :=[r_j^{i,1}, \ldots , r_j^{i,K}]$ be the predicted score vector (i.e.\ the normalized vector $\mathbf{a}$) returned by our model (here $K$ denotes the size of the output layer), and the individual components $r_j^{i,k}$ represent the predicted probability of $\mathbf{x}_j^i$ belonging to cluster $k$. One can also view $\mathbf{r}_j^i$ as a soft assignment of the datapoint to the $K$ clusters. 

Using these definitions, we define $\Lclassify$ and $\Llocal$ as 
\begin{align*}
    \Lclassify&:=-\sum_{j\textrm{th point in } \mathcal{X}^i}{\log(r_j^{i, s_j^i})}, \\
    \Llocal&:=\sum_{j\textrm{th point in } \mathcal{X}^i} {\sum_{j' \in \mathcal{N}(j)} \textup{KL}(\mathbf{r}_j \|\mathbf{r}_{j'})}.
\end{align*}
Note that $r_j^{i,s_j^i}\in \R$ is simply the predicted probability of $\mathbf{x}_j^i$ belonging to its ground truth cluster $s_j^i$, and $\mathcal{N}(j)$ denotes the nearest neighbors of $j$th datapoint. We chose the number of neighbors as 3 for all our experiments, which can be tuned further by cross-validation.

We train our model using Adam optimizer \citep{Kingma2014AdamAM}. Note that typical optimizers used for meta-learning, such as MAML \citep{Finn2017ModelAgnosticMF} and Reptile \citep{Nichol2018OnFM}, are specifically geared towards optimizing for supervised meta-learning tasks, and do not perform as well in our case. This is likely due to the sequential nature of our model.

The overall training and testing (across all training and testing tasks) is performed as follows.

\begin{algorithm}
\caption{Meta-Clustering (Training)}
\label{alg:meta-clustering}
\begin{algorithmic}[1]
\STATE Initialize model parameters $\Phi$ 
\FOR {iteration $i=1,2,3,\ldots$}
    \STATE {Clear LSTM cell states}
    \STATE {Randomly sample $N$ (batchsize) training datasets $\{(\mathcal{X}^{i_1}, \mathcal{L}^{i_1}),\ldots, (\mathcal{X}^{i_N},\mathcal{L}^{i_N})\}$}
\FOR{epochs $1,2,3,\ldots$}
    \STATE {Shuffle each sampled dataset $(\mathcal{X}^i, \mathcal{L}^i)$}
    \STATE {Sequentially update $\Phi$ via Adam Opt. }
\ENDFOR
\ENDFOR
\STATE \textbf{return} $\Phi$
\end{algorithmic}
\end{algorithm}

\begin{algorithm}
\caption{Meta-Clustering (Testing)}
\label{alg:meta-clustering-test}
\begin{algorithmic}[1]
\STATE {Use the learned $\Phi$ from training}
\FOR{each testing dataset $\mathcal{X}$}
\STATE {Clear LSTM cell states}
    \FOR{epochs $1,2,3,\ldots$}
        \STATE {Shuffle test dataset $\mathcal{X}$}
        \STATE {Sequentially predict the cluster id}
    \ENDFOR
\ENDFOR
\STATE \textbf{return} the corresponding clusterings 
\end{algorithmic}
\end{algorithm}

Observe that during each iteration in training, we randomly sample $N$ (batchsize) training datasets from our given pool of training tasks. The datasets as well as their ground truth labels (for optimization) are fed into our LSTM network architecture sequentially. The LSTM cell states are kept across epochs. This enables the network to keep memory of previously seen data points.

During the testing phase, each test task is fed into the pre-trained network to obtain the clustering. 
It is important to note that datapoints in each dataset are shuffled across iterations during both the training and testing phases. This prevents potential prediction errors introduced by specific sequence orders.

\subsection{The Possibility of Meta-Clustering}
Kleinberg's impossibility theorem \citep{DBLP:conf/nips/Kleinberg02} states that clustering is impossible because there is no clustering algorithm that can simultaneously satisfy three very intuitive axioms that any clustering algorithm should follow (the so called axioms of \textit{Scale-Invariance}, \textit{Richness}, and \textit{Consistency}). Interestingly, for \emph{meta clustering} one can formulate a generalized set of axioms that are in fact  consistent showing the \emph{possibility} of meta-clustering in a specific framework as detailed by  \citet{DBLP:conf/nips/Garg18}. Since our framework is different from \citet{DBLP:conf/nips/Garg18} (cf.\ Section \ref{sec:related}), their result is not directly applicable.
Luckily we can, however, formulate an alternate set of axioms that are more akin to our framework and achieve consistency. In this section, we will discuss how we can reframe the three axioms for meta-clustering to circumvent Kleinberg's impossibility result. 

Consider a finite set of points $X$ and the class of 
all possible symmetric distance distance functions $D(X)$ on $X$.
A clustering algorithm $\mathcal{A}$ can be viewed as taking a distance function $d \in D(X)$ as an input and returning a partition---i.e., a clustering---of $X$. With this notation, Kleinberg's clustering axioms for any clustering algorithm $\mathcal{A}$ can be stated as follows. 
%
%
\begin{itemize}
    \item 
    \textbf{Scale-Invariance}. For any distance function $d$ and any $\alpha > 0$, $\mathcal{A}(d) = \mathcal{A}(\alpha \cdot d)$.
    
    \item
    \textbf{Richness.} For any finite $X$ and clustering $\mathcal{C}$ of $X$, there exists a distance function $d \in D(X)$ such that $\mathcal{A}(d) = \mathcal{C}$.
    
    \item
    \textbf{Consistency.} Let $\mathcal{C}$ be the clustering produced by some distance function $d \in D(X)$, that is $\mathcal{A}(d) = \mathcal{C}$. Consider any distance function $d'\in D(X)$, such that for all  $x, \bar x \in X$, if $x, \bar x$ are in the same cluster in $\mathcal{C}$ then $d'(x,\bar x) \leq d(x,\bar x)$, and if $x,\bar x$ are in different clusters in $\mathcal{C}$ then $d'(x,\bar x) \geq d(x,\bar x)$. Then it must be the case that $\mathcal{A}(d') = \mathcal{A}(d)$.
    
\end{itemize}
\citet{DBLP:conf/nips/Garg18} suggests a re-framing of these axioms for \emph{meta}-clustering. Specifically, by introducing a variant of Scale-Invariance, \citet{DBLP:conf/nips/Garg18} shows that there is a meta-clustering algorithm that satisfies the new Scale-Invariance axiom and whose output always satisfies Richness and Consistency. Different from their version, we consider a formulation that is more appropriate and applicable in our setting. 

Suppose $\mathcal{M}$ is a meta-clustering algorithm as described in Section~\ref{sec:meta-setup}. We can view $\mathcal{M}$ to consist of two steps. First, $\mathcal{M}$ takes a distance function $d$ on the input dataset $X$ and outputs a \emph{clustering algorithm} $\mathcal{A}$ (instead of a clustering), i.e.\ $\mathcal{M}[d] = \mathcal{A}$. Second, the clustering algorithm $\mathcal{A}$, in turn, will do the clustering via $d$ and outputs a partition $\mathcal{C}$, i.e. $\mathcal{A}(d) = \mathcal{C}$. Essentially, $\mathcal{M}[d](d) = \mathcal{C}$. $\mathcal{M}$ is trained in the meta-training phase to adapt to clustering tasks, provided with datasets and true clustering labels, unlike \citet{DBLP:conf/nips/Garg18} which returns one clustering algorithm based on training datasets and labels. In our case, the LSTM architecture does both the steps, performing clustering on input data $X$ but also adapting to $X$. The adaptation happens through the change of activations and gates' values inside the LSTM. But how the activations and gates' values of LSTM are changed for different input data is determined by the LSTM weights which are trained during meta-training and are fixed through meta-testing. 
The meta version of the axioms in our setting are as follows.
\begin{itemize}
    \item 
    \textbf{Meta-Scale-Invariance.} For any $\alpha > 0$, $\mathcal{M}[d](d) = \mathcal{M}[\alpha d](\alpha d)$.
    \item
     \textbf{Meta-Richness.} For any finite $X$ and clustering $C$ of $X$, there exists $d \in D(X)$ such that $\mathcal{M}[d](d) = \mathcal{C}$.
    \item
    \textbf{Meta-Consistency.} 
    Let $\mathcal{C}$ be the clustering produced by some distance function $d \in D(X)$, that is $\mathcal{M}[d](d) = \mathcal{C}$. Consider any distance function $d'\in D(X)$, such that for all  $x, \bar x \in X$, if $x, \bar x$ are in the same cluster in $\mathcal{C}$ then $d'(x,\bar x) \leq d(x,\bar x)$, and if $x,\bar x$ are in different clusters in $\mathcal{C}$ then $d'(x,\bar x) \geq d(x,\bar x)$. Then it must be the case that $\mathcal{M}[d](d') = \mathcal{M}[d](d)$.
\end{itemize}
    The original consistency axiom requires $\mathcal{A}(d) = \mathcal{A}(d')$, where $d'$ can shrink intracluster and expand intercluster distances. 
    The impossibility arises from the fact that the distance distortion (shrinking and expansion) of $d'$ could have been from any distance function either the original $d$ or some other $d''$ that may produce a different clustering. 
    By introducing a level of indirection through meta-learning, we can essentially specify which distance function ($d$ or some other $d''$) the clustering algorithm uses, thus resolving the conflict. In fact the following holds true.

\begin{theorem}
There is a meta clustering algorithm that satisfies Meta-Scale-Invariant and Meta-Richness, Meta-Consistency for $|X| > 2$.
\end{theorem}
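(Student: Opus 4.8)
The plan is to reduce the statement to Kleinberg's observation that any two of his three axioms are simultaneously satisfiable, and then use the extra layer of indirection present in meta-clustering to obtain the third one (Meta-Consistency) essentially for free. Fix once and for all a base clustering function $f$ on $D(X)$ that satisfies Kleinberg's Scale-Invariance and Richness (but not necessarily Consistency); such an $f$ exists whenever $|X| > 2$, a concrete choice being single-linkage agglomerative clustering with the ``scale-$\alpha$'' stopping rule --- keep merging clusters until the smallest inter-cluster distance exceeds $\alpha \max_{x,\bar x} d(x,\bar x)$ --- for any fixed $\alpha \in (0,1)$. Then define the meta-clustering algorithm $\mathcal{M}$ by $\mathcal{M}[d] := \mathcal{A}_d$, where $\mathcal{A}_d$ is the \emph{constant} clustering algorithm $\mathcal{A}_d(d') = f(d)$ for every $d' \in D(X)$. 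In words, $\mathcal{M}$ inspects the distances it is first presented, commits to the partition $f$ would output on them, and thereafter ignores any re-presentation of the data; in the LSTM picture this ``committed reference distance'' is exactly the information locked in by the fixed post-training weights.

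Granting such an $f$, the three meta-axioms follow almost mechanically, and I would verify them in order. \textbf{Meta-Scale-Invariance:} $\mathcal{M}[d](d) = f(d)$ and $\mathcal{M}[\alpha d](\alpha d) = f(\alpha d)$, which coincide by Scale-Invariance of $f$. \textbf{Meta-Richness:} given a target clustering $\mathcal{C}$, Richness of $f$ supplies a $d$ with $f(d) = \mathcal{C}$, so $\mathcal{M}[d](d) = \mathcal{C}$. \textbf{Meta-Consistency:} if $\mathcal{C} = \mathcal{M}[d](d) = f(d)$, then for \emph{any} $d'$ --- in particular any $d'$ obtained from $d$ by the intra-cluster shrink / inter-cluster stretch transformation --- we get $\mathcal{M}[d](d') = \mathcal{A}_d(d') = f(d) = \mathcal{M}[d](d)$, since $\mathcal{A}_d$ is constant. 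The point worth stressing is that this argument never touches the transformation structure of $d'$: the indirection lets $\mathcal{M}[d]$ pin down which distance function is actually used for clustering, so the distortion that sinks ordinary Consistency cannot change the output.

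The only genuine work is the black box I invoked: producing a base $f$ that is at once Scale-Invariant and Rich, and seeing why this requires $|X| > 2$. For scale-$\alpha$ single-linkage, Scale-Invariance is immediate because multiplying $d$ by a positive constant rescales the edge lengths and the threshold $\alpha \max d$ identically, so neither the merge order nor the stopping time changes. Richness I would get by, given $\mathcal{C} = \{C_1,\ldots,C_k\}$, choosing $d$ to equal a tiny $\epsilon \le \alpha$ within clusters and $1$ between clusters (with a small adjustment when $k=1$, placing the large distance on a single pair, which keeps the $\epsilon$-graph connected precisely because $|X| \ge 3$): single-linkage then absorbs exactly the intra-cluster edges before the minimum inter-cluster distance $1$ exceeds $\alpha$, giving $f(d) = \mathcal{C}$. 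When $|X| = 2$ this collapses --- the single pair realizes the maximum distance, so for $\alpha < 1$ the rule never merges and $f$ can only ever return the all-singletons partition, violating Richness --- which is exactly why the theorem is restricted to $|X| > 2$. I expect this Richness bookkeeping, and the $k=1$ corner case, to be the fiddliest part of the write-up; the meta-axiom verifications above are each one line once the constant-algorithm trick is in place.
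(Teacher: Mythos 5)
Your proof is correct, and it takes a genuinely different route from the paper at the key step. Both arguments build on threshold single-linkage with a scale-dependent cutoff $\lambda \propto \max_{x,\bar x} d(x,\bar x)$, and both handle the one-cluster corner of Richness the same way (one long pair, everything else short, using $|X|>2$ for connectivity). The structural difference is what $\mathcal{M}[d]$ returns. In the paper, $\mathcal{M}[d]$ is the honest single-linkage algorithm $\mathcal{A}_\lambda$, which actually reads whatever distance function it is later handed; Meta-Consistency then needs a real (if short) argument that the shrink/stretch perturbation $d \mapsto d'$ does not change the connected components of the $\lambda$-threshold graph. In your construction, $\mathcal{M}[d]$ is the \emph{constant} map $d' \mapsto f(d)$, so Meta-Consistency is vacuous and you only ever need $f$ to be Scale-Invariant and Rich. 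What this buys you is generality and brevity: any base $f$ satisfying two of Kleinberg's axioms yields a witness, and the third meta-axiom comes for free. What it gives up is that the returned object is not a clustering algorithm in any meaningful sense --- it ignores its argument --- so your proof also exposes that Meta-Consistency, as literally formalized here, places no constraint at all on meta-algorithms that commit up front. The paper's choice of a non-constant $\mathcal{A}_\lambda$ is closer to the intended spirit of the framework, even though it requires the extra graph-invariance observation. One small nit in your write-up: for Richness you want the intra-cluster distance $\epsilon$ strictly below the threshold $\alpha\cdot\max d$, so write $\epsilon < \alpha$ rather than $\epsilon \le \alpha$; and the paper's parallel claim that $d$ and $d'$ induce ``the same graph'' is itself slightly too strong --- $d'$ can add intra-cluster edges --- but the connected components coincide, which is all either argument needs.
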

\begin{proof}
Similar to \cite{DBLP:conf/nips/Kleinberg02}, let's consider the family of single-linkage clustering functions. Each single-linkage clustering function has a threshold $\lambda$. The single-linkage clustering function operates by initializing each point as its own cluster and adding edges between two points if their distance is below $\lambda$. At the end, the connected components of the output graph are the clusters. 

We can construct a meta-clustering algorithm as a following two step procedure. Let $\frac{1}{2} < \tau < 1$ be a fixed constant. Now given any dataset $X$ (such that $|X|>2$).
\begin{itemize}
    \item (Meta-step, i.e.\ the choosing the clustering algorithm $\mathcal{A}$) Given a distance function $d$ on $|X|$, we can pick a single-linkage clustering function choosing $\lambda:= \tau \rho^*$, where $\rho^{*} = \max_{i,j} d(i, j)$. Observe that this step is akin to $\mathcal{M}[d]$, basically, $\mathcal{M}[d] = \mathcal{A}_\lambda$ (for the specified $\lambda$).
    \item (Clustering-step, i.e.\ applying $\mathcal{A}$ to the dataset for clustering) Once $\mathcal{A}_\lambda$ is picked, we can run single-linkage (with threshold $\lambda$) on $X$ with distance function $d$. This step is akin to $\mathcal{A}(d)$.
\end{itemize}

We will show that the meta-clustering algorithm as described above satisfy the three meta-clustering axioms.

\textbf{Meta-Scale-Invariance.} Suppose $\mathcal{M}[d](d) = \mathcal{C}$, or equivalently, $\mathcal{A}_\lambda(d) = \mathcal{C}$ ($\lambda$ as defined above). By the construction of the meta clustering algorithm, $\mathcal{M}[\alpha d]$ returns a single linkage function $\mathcal{A}_{\alpha\lambda}$. It is trivial to show that $\mathcal{A}_\lambda(d) = \mathcal{A}_{\alpha\lambda}(\alpha d)$, that is, the scaling gives back exactly the same clustering $\mathcal{C}$. 

\textbf{Meta-Richness.} Consider an arbitrary partition $\mathcal{C}$ with at least two clusters. We can construct the following distance function $d$ such that if $x, \bar x$ are in the same cluster in $\mathcal{C}$, $d(x, \bar x) = 1$ and $d(x, \bar x) = 2$ if not. It is easy to verify that this is a valid distance metric. Since $\frac{1}{2} < \tau < 1$, the threshold $\lambda$ would be strictly larger than $1$. Therefore, meta-clustering can successfully recover $\mathcal{C}$. If $\mathcal{C}$ only has one cluster, we arbitrarily select a pair of data points $x_i$ and $x_j$ and set $d(x_i, x_j) = 2$ with the rest of pairwise distances to be $1$. This is still a valid metric. There will be no edge between $x_i$ and $x_j$ while the other nodes are connected. Because $|X| > 2$, we know there exits at least another node $x_z$ such that $x_z$ and $x_i$ is connected and $x_z$ and $x_j$ is connected. Therefore, there is still only one connected component and thus one cluster.

\textbf{Meta-Scale-Consistency.} Suppose $\mathcal{M}[d]$ returns a single-linkage clustering function with threshold $\lambda$, that is, $\mathcal{A}_\lambda$. Now consider applying this clustering function to a different distance function $d'$ defined as follows. Let $\mathcal{C}$ be the clustering produced by $d$, that is $\mathcal{C}=\mathcal{A}_\lambda$. If $x, \bar x$ are in the same cluster in $\mathcal{C}$ then $d'(x,\bar x) \leq d(x,\bar x) \leq \lambda$. And, if $x,\bar x$ are in different clusters in $\mathcal{C}$ then $d'(x,\bar x) \geq d(x,\bar x) \geq \lambda$. Observe that the single-linkage clustering function with threshold $\lambda$ will create the same graph for $d$ and $d'$ and thus $\mathcal{A}_\lambda(d) = \mathcal{A}_\lambda(d')$, producing the same clusters. 
\end{proof}

\section{Experiments}
\label{sec:experiments}

We evaluate the efficacy of our proposed LSTM network architecture for Meta-Clustering by performing various synthetic and real-world experiments that measures how various aspects of the input data (such as representation dimension, number of true clusters, etc.) affect the prediction quality.

We start by describing our synthetic data generation process.

\subsection{Synthetic Dataset Generation}
\label{sec:synth_data}
\textbf{Generating Gaussian-shaped clusters.} The most basic kind of clustering are those where each cluster is generated from a multivariate Gaussian distribution with a random mean and covariance. Specifically, to generate $K$ clusters in $d$ dimensions, we sample the means $\mu_i$ and covariances $\Sigma_i$ ($i$ denotes the cluster id) as
\begin{align*}
\mu_i &\sim \textup{Uniform}([\theta,\theta]^d), 
\;\;\;\;\; \Sigma_i  := \hat{C}_i^\mathsf{T}\hat{C}_i, 
\;\;\; \textrm{ where} \\
\hat{C}_i &= \textup{orth}(C_i)/ k_i, \,
C_i :=[c_1 \ldots c_d ], \\
c_j &\sim \textup{Gaussian}(0,I), \,
k_i \sim \textup{Uniform}(\alpha,\alpha+\beta).
\end{align*}

$\textup{orth}(\cdot)$ denotes the orthogonalization of $\cdot$, and parameters $\alpha$ and $\beta$ control the magnitude of the entries in the covariance matrix. Smaller $\alpha$ and $\beta$ results in overlapping clusters, and thus results in harder to distinguish clusters.

\textbf{Generating curved shaped clusters by adding simple nonlinearities.}
Mere Gaussian shaped data generation may not capture the complex nature of some real world data.
We therefore introduce simple nonlinearities to our synthetically generated random Gaussian clusters. For any point $x$ and two arbitrary coordinates $p,q$, we apply the following:
\begin{align*}
&\hat{x}_p = \cos(t)x_p + \sin(t)x_q, \;
\hat{x}_q = -\sin(t)x_p + \cos(t)x_q \\
 &\textrm{where } t = \pi r/k_r, \, r=\|z\|, \, z = [x_p,x_q ], \\
&k_r  \sim \textup{Uniform}(\alpha_r,\alpha_r+\beta_r).
\end{align*}

We apply this transformation to points in a cluster several times (each time selecting two different coordinates). 
See Figure~\ref{fig:sample_data} for example datasets generated by this process.

\begin{figure}
\centering
  \begin{tabular}{@{}cc@{}}
   \includegraphics[width=.25\textwidth]{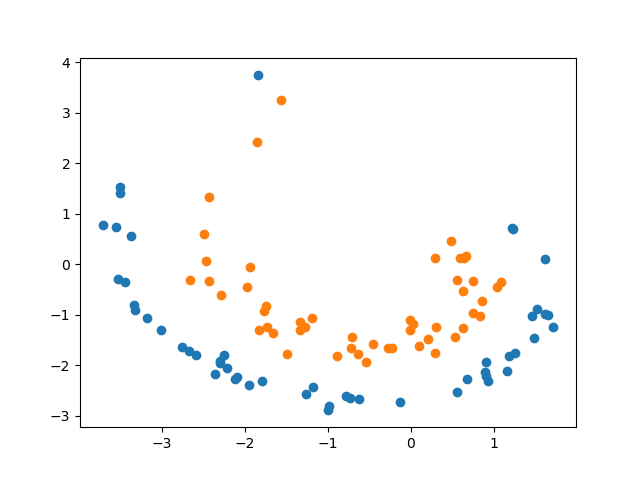} 
     \includegraphics[width=.25\textwidth]{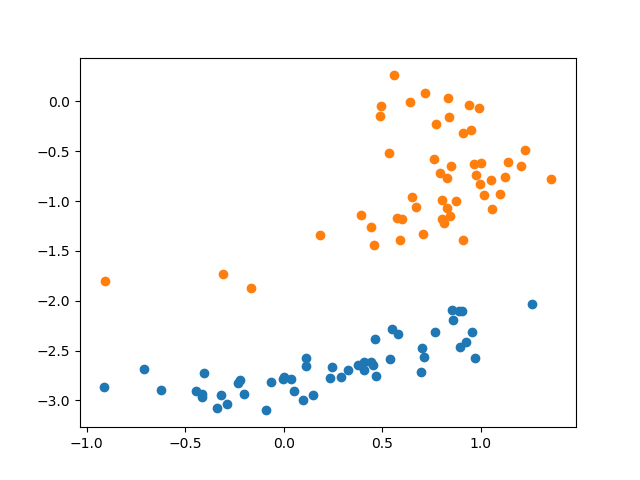}  \\
    \includegraphics[width=.25\textwidth]{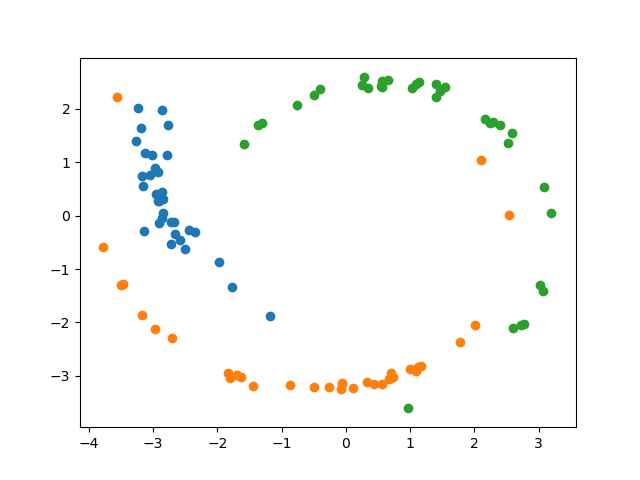} 
    \includegraphics[width=.25\textwidth]{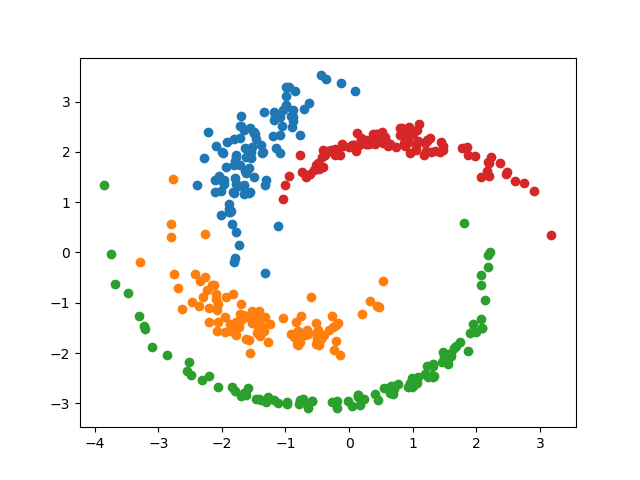}
    \\
  \end{tabular}
  \caption{Sample synthetic datasets generated by our procedure.}
  \label{fig:sample_data}
\end{figure}

\textbf{Assigning the cluster identity.}
While we can assign labels to the randomly generated clusters with any permutation, the extra degree of freedom makes the training harder. Instead, the clusters are sorted by the first dimension of their mean vectors and the cluster ids are assigned sequentially. 



\subsection{Baseline Methods and Evaluation Criteria}
We use several popular clustering methods as baselines to compare the quality of our proposed Meta-Clustering.
\begin{itemize}
\item $k$-means -- it is arguably the most popular clustering method; the number of clusters needs to be prespecified; can only find convex clusters. 
\item Kernelized $k$-means -- it is a nonlinear extension to $k$-means that can potentially find arbitrary shaped clusters. We use radial basis function (rbf) kernel for all our experiments.
\item Spectral Clustering  -- it is another very popular nonlinear extension to $k$-means that can potentially find arbitrary shaped clusters. 
\item DBSCAN -- it is a density based clustering that can find arbitrary shaped clusters, and is robust to noise. The user does not need to specity the number of clusters. It is most effective on low-dimensional clustering problems.
\item {DEC} -- it is a deep learning based clustering algorithm \citep{xie2016unsupervised}. It employs an auto-encoder as feature extractor and uses soft assignment to calculate loss to optimize. Because auto-encoder needs to be trained for each dataset, the runtime can be long when presented with samples from multiple datasets and it is most effective when dataset is large. 
\end{itemize} 
The reported results use the best parameter settings for each of the baseline methods.  The clustering quality is evaluated using the 0-1 loss.
Since any permutation of labels should not change the clustering quality, the best permutation of the predictions also needs to be taken into account 
using the  
Kuhn-Munkres algorithm.



\begin{figure*}[b]
\centering
\begin{subfigure}
    \centering
  \begin{tabular}{@{}cccc@{}}
    \includegraphics[width=0.3\textwidth]{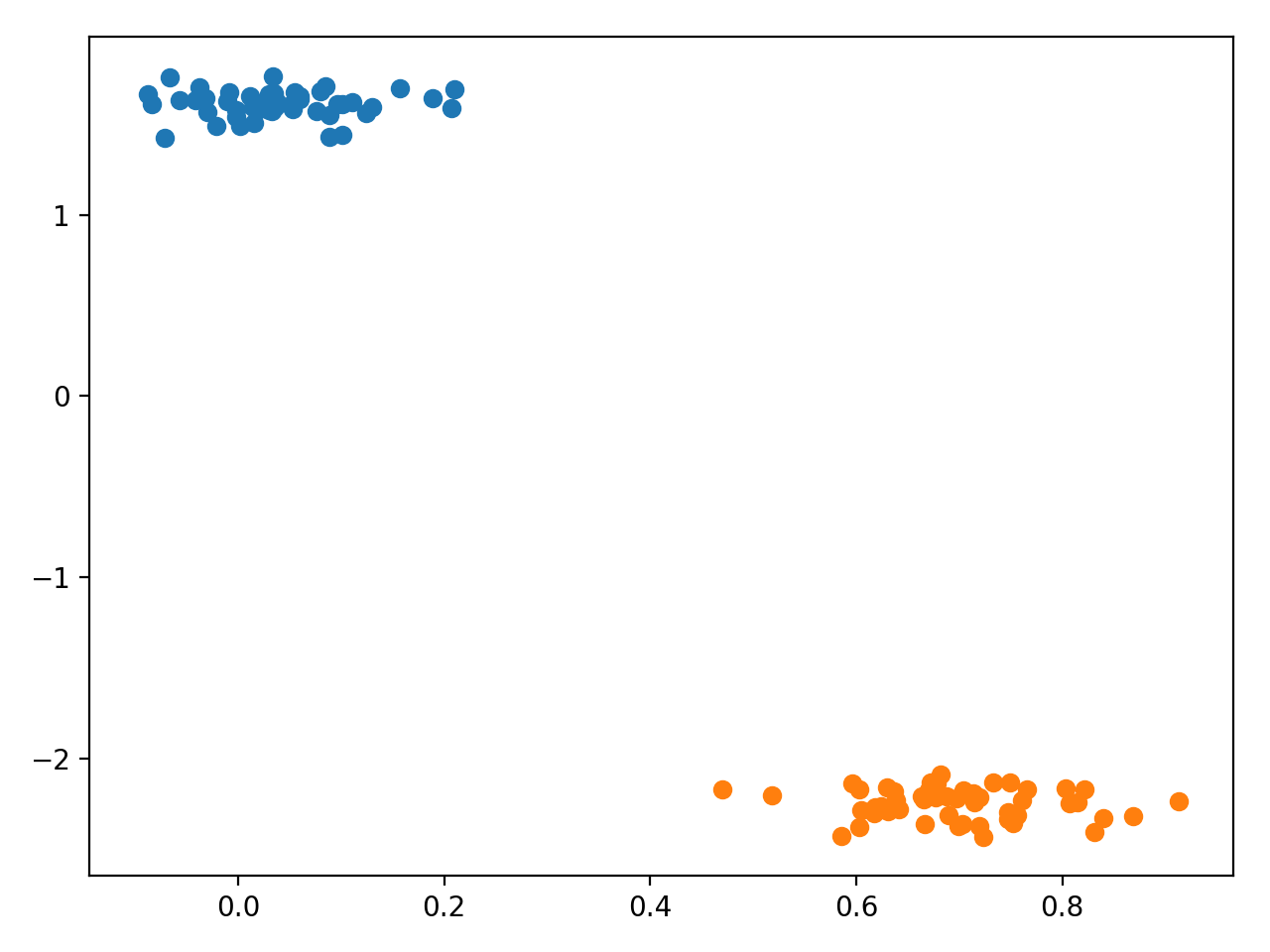} &
    \includegraphics[width=0.3\textwidth]{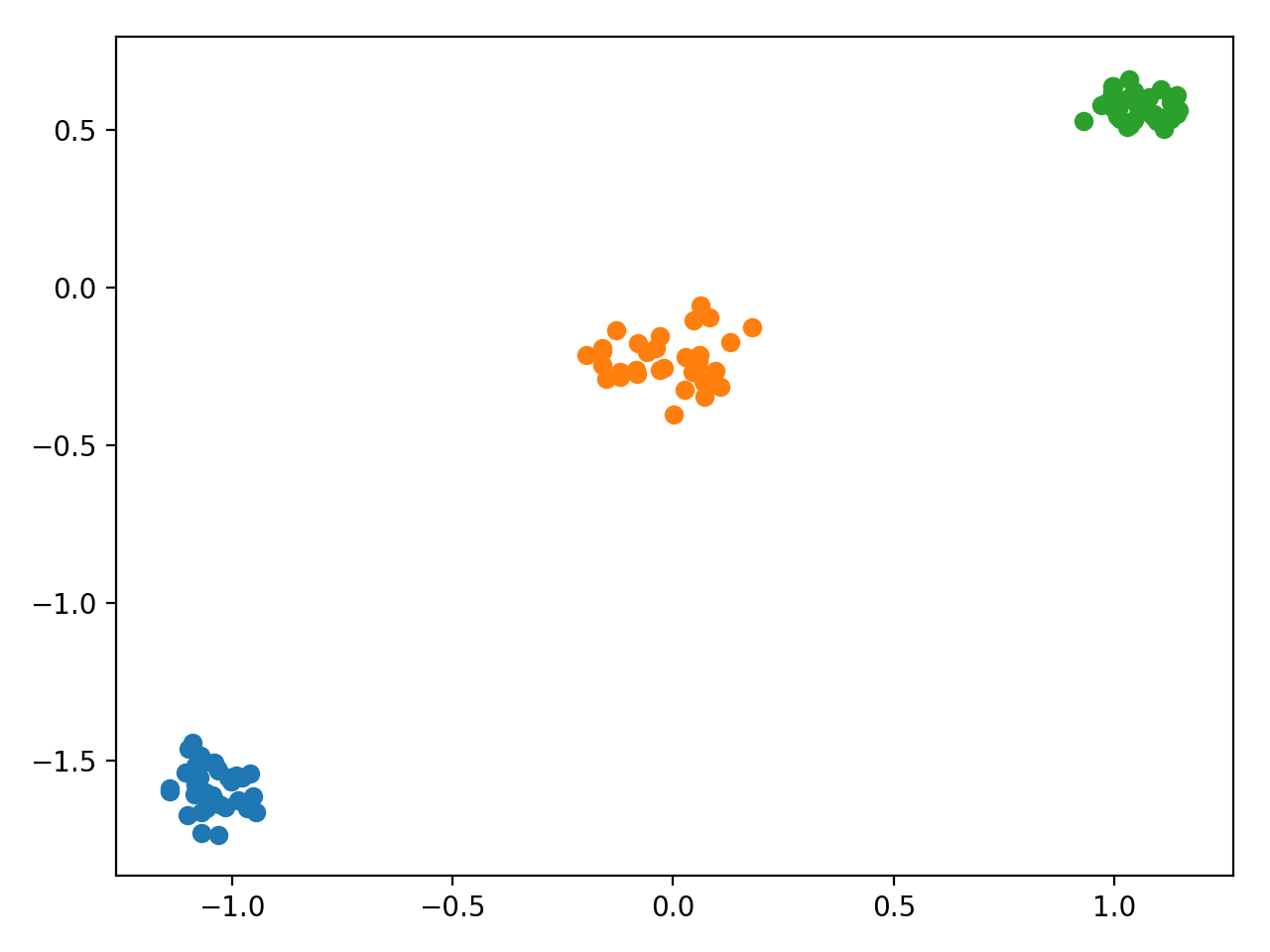} &
     \includegraphics[width=0.3\textwidth]{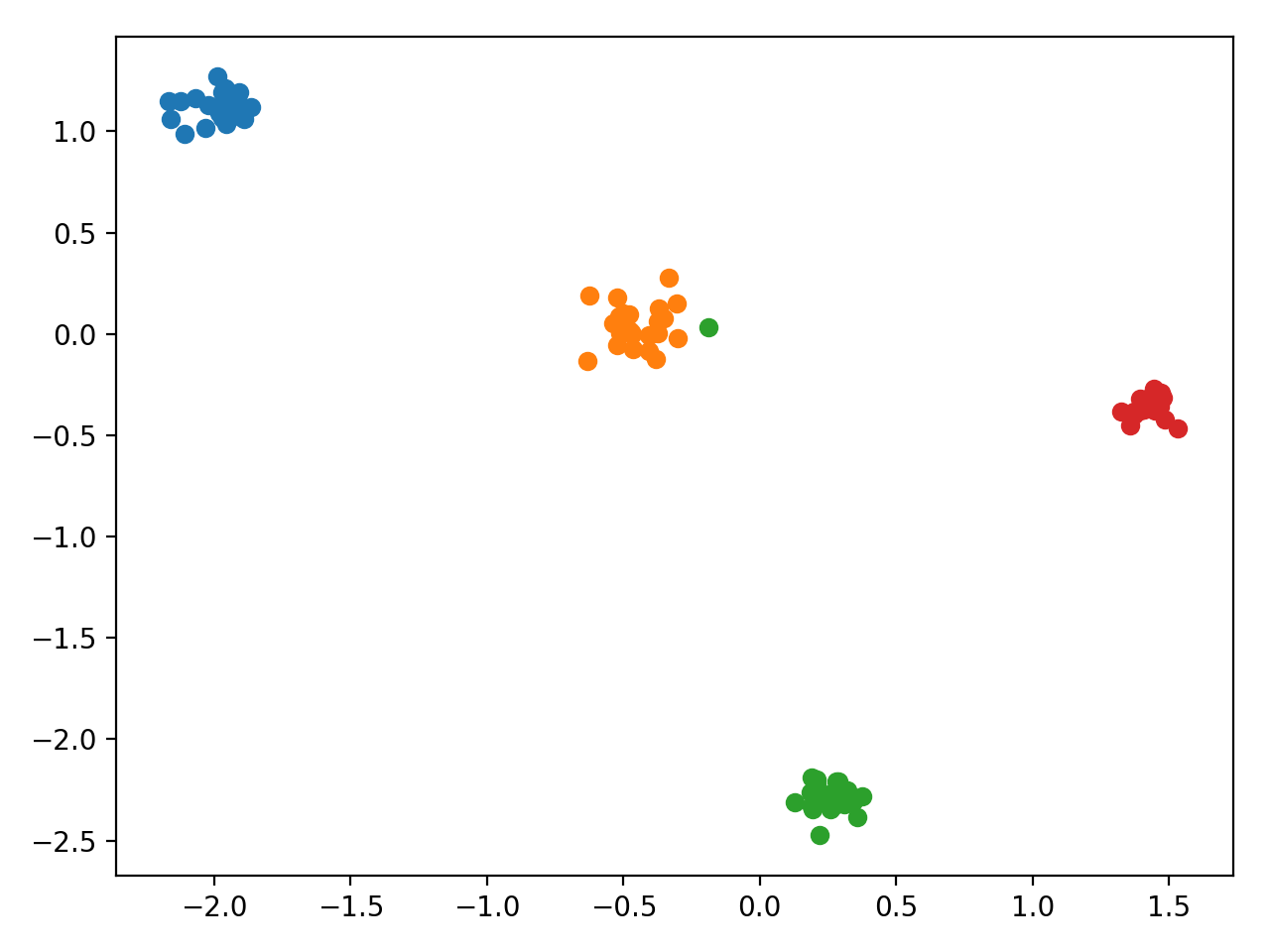} 
  \end{tabular}
  \caption{Clustering results when varying number of clusters in the test task.}
  \label{fig:adaptive}
\end{subfigure}

\end{figure*}
\subsection{Experiments on Synthetic Datasets}
\label{sec:syn-exp}
We first evaluate our model on synthetic test datasets by training on synthetic datasets. Training and test datasets are generated the same way but drawn independently.

\begin{table*}
    \centering
    \caption{Average Error rate of meta-clustering on synthetic dataset 
    (100 samples).}
    \begin{tabular}{ccccccc}
        \toprule
        $k$-means & \tabsplit{Kernel \\ $k$-means} & \tabsplit{Spectral \\ Clustering} & DBSCAN  & DEC & Meta-Clustering & \tabsplit{Meta-Clustering \\ (scaled)}\\
        \midrule
        0.17	& 0.29 & 0.17 & 0.48 & 0.18 & \textbf{0.08} & 0.10\\
        \bottomrule
    \end{tabular}
    \label{tab:2-D}
\end{table*}

\textbf{2-D case study.}
Table~\ref{tab:2-D} shows the performance of our meta-clustering model when compared to the other benchmark clustering methods. The model is trained on synthetic dataset with two clusters. Since the datasets generated are arbitrary shaped clusters (and not necessarily convex), centroid based methods that operate in the input representation like $k$-means is not expected to work well. Remarkably even more flexible methods such as spectral clustering do not yield significant improvement over $k$-means either. Our meta-clustering method (second to last column in Table~\ref{tab:2-D}) performs the best, demonstrating the power of the meta-learning framework: when trained on similar related tasks. Meta-clustering in some sense \emph{learns} the right notion of cluster loss and can outperform even the most popular cluster losses for simplest of tasks.


\textbf{Location and Scale Invariance.}
Meta clustering can also do well on datasets that are not limited to where the training data resides in the representation space. In this experiment, our data points during test were translated and scalded by a factor of $3$. As shown in the last column of Table~\ref{tab:2-D}, meta-clustering still performs comparably. This indirectly suggests that meta-clustering does not cluster simply based on what is observed in the training data, and generalizes well. This observation is further corroborated in Section \ref{sec:real_data} by testing the synthetically trained model on real datasets and Section \ref{sec:open-ml} by training and testing on real datasets of different distributions.


\textbf{Adapting to the Number of Clusters.}
It is worth noting that our model has the ability to approximate the number of clusters in the new clustering task. 
Though, there is still a limit on the maximum number of clusters (determined by the output dimension of our LSTM architecture). 
In this experiment, our model is constrained to output up to 5 clusters. The training data consists of synthetically generated clusters with the number of clusters varying between 1 and 5. We sample more training datasets with higher number of clusters to prevent training models that are biased towards returning fewer clusters.
Figure~\ref{fig:adaptive} shows the results on test datasets, showing the ability of our meta-clustering architecture to adapt to given dataset.

\subsection{Training with Synthetic Data}
\label{sec:real_data}
We also evaluate our method on several real datasets. Our primary goal is to evaluate how our clustering algorithm performs on real datasets even when the training is done on synthetic datasets. Training configurations (number of clusters, feature dimensions, number of data points) for synthetic data will match the test case.

\begin{table*}
    \centering
    \caption{Error rates on UKM, MNIST and IRIS datasets (trained on synthetic data)
    }

    \begin{tabular}{ccccccc}
        \toprule
        Method & $k$-means & Ker. $k$-means & Spec. Clust. & DBSCAN  & DEC & Meta Clust. \\
        \midrule
        UKM & 0.31	&\textbf{0.26}	&0.28	&0.49 & 0.28	&\textbf{0.26} \\
        \midrule
        MNIST & 0.35	& 0.33	& 0.40	& 0.49	& 0.46 & \textbf{0.32} \\
        \midrule
        IRS & 0.19	&0.23	&0.19	&0.38 & 0.24 & \textbf{0.09} \\
        \bottomrule
    \end{tabular}
    \label{tab:MNIST}
\end{table*}

\textit{User Knowledge Modeling}
\citep{Kahraman:2013:DIK:2400768.2401504} is a dataset about the students' knowledge on the subject of Electrical DC Machines. Each example has five attributes describing various aspects of a student's knowledge. Students are classified into four knowledge levels: ``very low", ``low", ``medium" and ``high".

During training, we randomly generate 100 synthetic datasets per epoch and train for 50 epochs. Each synthetic datasets consists of 100 data points and 2 clusters. The generation process is the same as described in Section \ref{sec:synth_data}. For test, we sample 100 points from ``low" and ``high" classes, and evaluate the trained model by averaging the error rates of 100 runs. 
The result is shown in Table \ref{tab:MNIST}. Notice that meta-clustering gets competitive error rates. 


\textit{MNIST database} \citep{726791} of handwritten digits has 70,000 examples. 
We preprocess the dataset by applying PCA down to 2 dimensions. 
{We trained the model on synthetic datasets the same way as described for UKM dataset} but with 1000 data points per dataset. During test, we sample 1000 data points from two randomly selected two digits each time and the error rate shown in Table~\ref{tab:MNIST} is the average over 100 samples.
Table~\ref{tab:MNIST} shows that meta clustering outperforms other standard clustering methods. For a fair comparison, the auto-encoder in DEC is re-trained for each sample because each clustering algorithm should only look at the sample given. 

\textit{Iris dataset} \citep{Dua:2017} contains three classes with a total of 150 data points and 4 features. The model is trained similarly as before with 150 data points per synthetic dataset and tested on the 150 iris data points without sampling.
Table~\ref{tab:MNIST} shows that our model performs much better than the standard benchmarks. 
Figure~\ref{fig:iris} shows the clustering result on this data in detail. Notice that our method (right plot) can uncover the two clusters on the right much better than $k$-means (left plot).

\begin{figure*}
\setlength\tabcolsep{2pt}
\centering
  \begin{tabular}{@{}ccc@{}}
    \includegraphics[width=.33\textwidth]{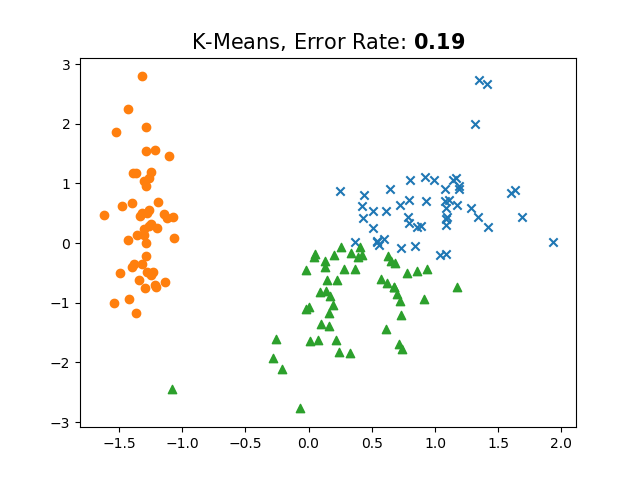}&
    \includegraphics[width=.33\textwidth]{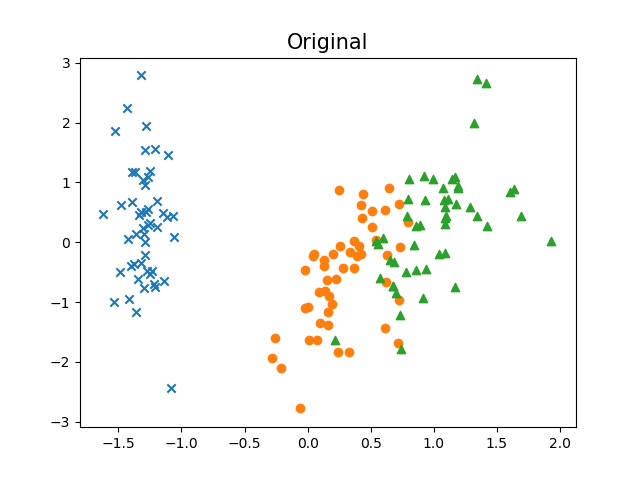} &
   \includegraphics[width=.33\textwidth]{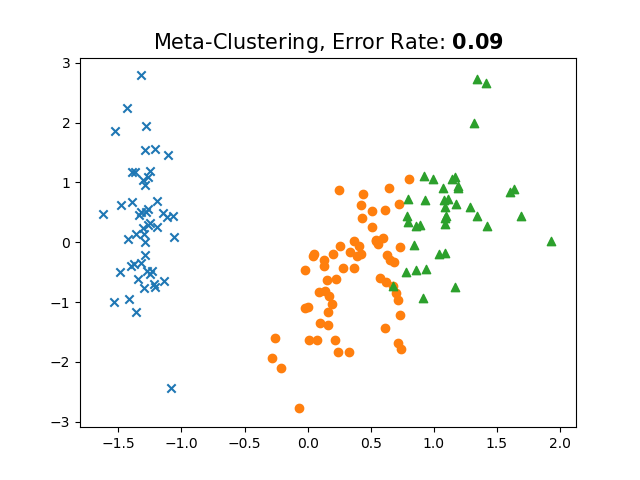} 
  \end{tabular}
  \caption{ \footnotesize Visualized comparison of $k$-means and meta-clustering on Iris dataset. Visualization made by projecting the data onto the top two principal component. Center: Iris data with ground truth labelling, Left: clustering produced by $k$-means, Right: clustering produced by meta-clustering. Quantitatively, error rate measure shows Meta-clustering produces significantly better quality clusters.}
  \label{fig:iris}
\end{figure*}

\subsection{Training with Real Data}
\label{sec:open-ml}

Following \cite{DBLP:conf/nips/Garg18}, we also train our model the on the repository of classfication datasets from openml \citep{OpenML2013} database. By excluding labels, they can be viewed as clustering problems. For each experiment, we fetch all the datasets in openml repository that satisfied the desired feature dimensions and number of clusters (classes). We then randomly selected $10\%$ of the queried datasets for test and the rest for training. To emphasize the generalization power of our model, we do not tune the hyperparamters for each experiment and instead keep the architecture same for all the experiments. We randomly sample datapoints from each dataset to form meta-training and meta-test datasets. For each experiment, every meta-training or meta-test dataset has the same number of datapoints. To avoid imbalanced clusters dominating meta-training, we choose to sample each cluster uniformly (similar to \citealp{Hsu2018UnsupervisedLV}). 

Table~\ref{tab:openml} shows the results when varying $k$ (the number of clusters), $N$ (the number of data points per meta dataset) and the range of feature dimensions. The feature dimension range is chosen such that the average number of datasets per feature dimension is high for effective training. We padded every dataset with zeros such that they all have the same dimension in each case. Results show that our Meta Clustering method typically has the lowest error rate. 

Note that in some cases, there may not be enough datasets for training or the available datasets are not diverse enough or inherently hard to cluster. Unlike other deep clustering models, our model is only designed to learn to cluster so the performance can be limited by raw features. But as shown in the previous section, training on synthetic dataset can be helpful to clustering real data. Therefore, we augment the openml datasets with synthetic datasets for models that are under-trained by the available openml datasets, specifically, the experiments with $k=3$ (28 datasets available) and $k=4$ (20 datasets available). For $k=3$, the newly trained model significantly out-performs other baselines. For $k=4$, the error rate dropped getting closer to the best clustering algorithm in this experiment. Observe that it does not surpass every benchmark partially because the generalization ability of the model is limited by the training datasets; poorly chosen training datasets can be detrimental to meta clustering.

We also explore the case when the number of clusters is not fixed but rather chosen from a range. For methods like $k$-means, the choice of $k$ can be ambiguous. While there are approaches like Elbow methods to choose $k$, such heuristics are hard to apply across different datasets in the openml repository. So for clustering algorithm that requires $k$, we use the maximum number of possible clusters. This is also a fair comparison because the output dimension of our meta model is also fixed in the same way as described in Section~\ref{sec:method}. Table~\ref{tab:openml_multi_k} demonstrates that meta-clustering outperforms in this case as well (including the deep clustering DEC benchmark). 

\begin{table*}[h]
    \centering
    \setlength\tabcolsep{2pt}
    \caption{Error rate on openml test datasets (100 samples). The second error rate of the Meta clustering column, if needed, comes from models trained with real datasets and synthetic datasets. 
    }. 
    \begin{tabular}{ccccccc}
        \toprule
        Method & $k$-means & Ker. $k$-means & Spec. Clust. & DBSCAN  & DEC & Meta Clust. \\
        \midrule
        \tabsplit{k = 2, N = 100 \\ dims: [1, 15]} & $\textbf{0.38} \pm \textbf{0.09}$	& $0.43 \pm 0.07$ & $0.42 \pm 0.10$ & $0.51 \pm 0.05$ & $0.40 \pm 0.09$ & \tabsplit{$\textbf{0.38} \pm \textbf{0.09}$ \\ \NA} \\
        \midrule
        \tabsplit{k = 3, N = 100\\ dims: [1, 15]} & $0.11 \pm 0.02$ & $0.13 \pm 0.09$	& $0.12 \pm 0.05$	& $0.29 \pm 0.03$	&$0.22 \pm 0.14$ &  \tabsplit{$0.34 \pm 0.01$ \\ $\textbf{0.02} \pm \textbf{0.01}$ }\\
        \midrule
         \tabsplit{k = 4, N = 500 \\ dims: [1, 20] } & $\textbf{0.54} \pm \textbf{0.03}$	& $0.62 \pm 0.08$	& $0.56 \pm 0.05$	& $0.74 \pm 0.01$	& $0.57 \pm 0.08$& \tabsplit{$0.63 \pm 0.02$  \\ $0.58 \pm 0.02$ }\\
         \midrule
          \tabsplit{k = 6, N = 500 \\ dims: [1, 40]} & $0.76 \pm 0.01$	& $0.79 \pm 0.01$	& $0.82 \pm 0.00$	& $0.83 \pm 0.00$	& $0.73 \pm 0.09$ & \tabsplit{$\textbf{0.59} \pm \textbf{0.04}$\\ \NA }\\
        \bottomrule
    \end{tabular}
    \label{tab:openml}
\end{table*}

\begin{table*}
    \centering
    \setlength\tabcolsep{1.5pt}
    \caption{Error rate on openml test datasets with unknown $k$ (100 samples). 
    }
    \begin{tabular}{cccccccc}
        \toprule
        Method & $k$-means & Ker. $k$-means & Spec. Clust. & DBSCAN  & DEC & Meta Clust. \\
        \midrule
        \tabsplit{k = $\{ 2,3,4 \}$ N: 500 \\ dims: [1, 15]} & $0.60 \pm 0.08$	& $0.65 \pm 0.08$ & $0.57 \pm 0.11$ & $0.56 \pm 0.09$ & $0.54 \pm 0.09$ &$\textbf{0.53} \pm \textbf{0.08}$\\
        \midrule
        \tabsplit{k = $\{ 3,4,5 \}$, N: 500\\ dims: [1, 15]} & $0.62 \pm 0.05$ & $0.67 \pm 0.09$	& $0.59 \pm 0.06$	& $0.65 \pm 0.10$& $0.60 \pm 0.08$	& $\textbf{0.58} \pm \textbf{0.09}$\\
       \bottomrule
    \end{tabular}
    \label{tab:openml_multi_k}
\end{table*}

\section{Discussion and Conclusion}
\label{sec:conclusion}
In this paper, we present a novel deep learning algorithm for learning to cluster. Instead of directly optimizing a specific cluster loss on a given dataset \citep{DBLP:conf/icml/YangFSH17}, we propose to learn a meta-algorithm that can adapt to new clustering tasks. 

Just like the success of any transfer learning problem depends on the similarity between the training and the test tasks, the quality of our meta-clustering is also expected to depend on that. Nevertheless,
we show that our meta-clustering model can perform better than important clustering benchmarks when trained with simple synthetic datasets or on existing labeled datasets.

\bibliographystyle{abbrvnat}

\clearpage
\onecolumn

\end{document}